\newtheorem{thm}{Theorem}%
\newtheorem{defi}{Definition}%
\newtheorem{prop}{Proposition}
\newtheorem{cor}{Corollary}
\newtheorem{obs}{Observation}
\newtheorem{ex}{Example}
\def \bp{\boldsymbol{p}}
\def \bq{\boldsymbol{q}}
\def \S{\boldsymbol{\Delta}}
\def \vol{{\mu}}
\title{Optimal Confidence Regions for the Multinomial Parameter}
\date{}
\author{Matthew L.\ Malloy, Ardhendu Tripathy and Robert D.\ Nowak
\thanks{M.\ Malloy and R.\ Nowak are with the Electrical \& Computer Engineering Department at University of Wisconsin-Madison. emails: \texttt{\{mmalloy,  rdnowak\}@wisc.edu}. A.\ Tripathy is with the Computer Science Department at Missouri University of Science \& Technology. email: \texttt{astripathy@mst.edu}}}
\begin{document}
\maketitle

\begin{abstract}
Construction of tight confidence regions and intervals is central to statistical inference and decision making. 
%Analysis and implementation of learning algorithms make an often overlooked \emph{sub-Gaussian} assumption.  The sub-Gaussian class is broad, leaving room for improvement when the distributions are not precisely Gaussian with equal variance.
This paper develops new theory showing minimum average volume   confidence regions for categorical data.  More precisely, consider an empirical distribution $\widehat{\bp}$ generated from $n$ iid realizations of a random variable that takes one of $k$ possible values according to an unknown distribution $\bp$. This is analogous to a single draw from a multinomial distribution.  A confidence region is a subset of the probability simplex that depends on $\widehat{\bp}$  and contains the unknown $\bp$ with a specified confidence.  This paper shows how one can construct minimum average volume confidence regions, answering a long standing question.  We also show the optimality of the regions directly translates to optimal confidence intervals of linear functionals such as the mean, implying sample complexity and regret improvements for adaptive machine learning algorithms. 
\end{abstract}
% Construction of tight confidence regions and intervals is central to statistical inference and decision-making. 
% We describe optimal (i.e.\ having minimal volume) confidence regions that contain the estimated quantity with a desired probability. 
% We demonstrate the benefits of our approach by applying it to the widespread example of categorical data. 
% In a crowdsourced cartoon caption contest where several captions are rated one of $\{1, 2, 3\}$, with $1$ indicating "unfunny" and $3$ indicating "funny", we identify the funniest caption using fewer responses than other approaches that build confidence intervals for the mean using Hoeffding, Bernstein or Kullback-Leibler divergence-based tail probability inequalities. 
% Beyond categorical data, our minimal volume confidence region can be applied to general measure spaces.

%\input{intro}

\section{Introduction}

%The multinomial confidence regions may have utility in a broad range of applications in which the observations or responses take values in a finite set. 
%Confidence bounds and concentration inequalities play a central role in the analysis and design of machine learning algorithms, especially sequential and adaptive methods such as A/B testing, multi-armed bandits and reinforcement learning. 
%We show that optimal confidence regions for the multinomial parameter provide optimal (minimum-length) confidence intervals for linear functionals of the distribution, such as the mean.  Thus, the resulting confidence intervals are tighter, on average, than any known constructions, including Hoeffding bounds, Kullback Leibler divergence-based bounds \cite{garivier2013}, and the empirical Bernstein bound \cite{mnih2008empirical,maurer2009empirical,audibert2009exploration}. The reason for the improved coverage is that, unlike other methods, the new confidence intervals account for the shape of the distribution in the simplex.  %We demonstrate the advantage of the new confidence intervals in a multi-armed bandit setting.

This paper shows an optimal confidence region construction for the parameter of a multinomial distribution. The confidence regions, a generalization of the famous Clopper-Pearson confidence interval for the binomial \cite{clopper1934use}, are optimal in the sense of having minimal average volume in the probability simplex for a prescribed confidence level.  %A precise statement of this optimality is provided below.  

Consider an empirical distribution $\widehat{\bp}$ generated from $n$ i.i.d.\ samples of a discrete random variable $X$ that takes one of $k$ values according to an unknown distribution $\bp$.  A confidence region for $\bp$ is a subset of the $k$-simplex that depends on $\widehat{\bp}$, and includes the unknown true distribution $\bp$ with a specified confidence.  
More precisely,  
$\mathcal{C}_{\delta}( \widehat{\bp}) \subset \Delta_{k} $ is a confidence region at confidence level $1{-} \delta$ if 
\begin{eqnarray} \label{eqn:cr}
\sup_{\bp \in \Delta_{k}} \mathbb{P}_{\bp}  \left( \bp  \not \in \mathcal{C}_{\delta}(\widehat{\bp}) \right) \leq \delta \, ,
\end{eqnarray}
where $\Delta_{k}$ denotes the $k$-simplex, and $\mathbb{P}_{\bp}(\cdot)$ is the probability measure under the multinomial parameter $\bp$.

%\textcolor{blue}{Do we need $\sup$ or is $\max$ sufficient, as $\Delta_k$ is a closed and compact set? Given a $\widehat{\bp}$ the set $\mathcal{C}_\delta(\widehat{\bp})$ is fixed so $\mathbb{P}_{\bp}(\mathcal{C}_\delta(\widehat{\bp}) \cap \Delta_{k,n})$ should vary smoothly with $\bp$.}

Construction of tight confidence regions for categorical distributions is a long standing problem dating back nearly a hundred years  \cite{clopper1934use}.  The goal is to construct regions that are as small as possible, but still satisfy (\ref{eqn:cr}).  Broadly speaking, approaches for constructing confidence regions can be classified into \emph{i)} approximate methods that fail to guarantee coverage (i.e, (\ref{eqn:cr}) fails to hold for all $\bp$) and \emph{ii)} methods that succeed in guaranteeing coverage, but have excessive volume -- for example, approaches based on Sanov or Hoeffding-Bernstein type inequalities.   Recent approaches based on combinations of methods \cite{nowak2019tighter} have shown improvement through numerical experiment, but do not provide theoretical guarantees on the volume of the confidence regions. To the best of our knowledge, construction of confidence regions for the multinomial parameter that have minimal volume and guarantee coverage is an open problem.

One construction that has shown promise empirically is the \emph{level-set} approach of \cite{chafai2009confidence}. The level-set confidence regions are similar to `exact' and Clopper-Pearson\footnote{We note that `exact' and Clopper-Pearson are synonymous in many texts \cite{agresti1998approximate}.} regions \cite{clopper1934use} as they involve inverting tail probabilities, but are applicable beyond the binomial case, \emph{i.e.}, they are defined for $k>2$.   Clopper-Pearson, exact, and level-set confidence regions are closely related to statistical significance testing; the confidence region defined by these approaches is synonymous with the range of parameters over which the outcome \emph{is not statistically significant at an exact p-value of $1-\delta$}.  For a thorough discussion of these relationships in the binomial case, see \cite{agresti2003dealing, agresti1998approximate} and reference therein.

This paper proves that the \emph{level-set} confidence regions of \cite{chafai2009confidence}, which are extensions of Clopper-Pearson regions, are optimal in that they have minimal average volume among any confidence region construction.  More precisely, when averaged across either \emph{i)} the possible empirical outcomes, or \emph{ii)} a uniform prior on the unknown parameter $\bp$, the level-set confidence regions have minimal volume among any confidence region construction that satisfies the coverage guarantee.  The proof first involves showing that arbitrary confidence regions can be expressed as the inversion of a set mapping.  The level-set confidence regions are minimal in this setting by design, and the minimal average volume property follows.  As the authors of \cite{chafai2009confidence} observe through numerical experiment, the level-set confidence regions have small volume when compared with a variety of other approaches.  Indeed this observation is correct; the regions minimize average volume among \emph{any} construction of confidence regions.  

While motivation for tight confidence regions can be found across science and engineering, one motivation comes from the need for tighter confidence intervals for the mean.  Indeed, confidence intervals for functionals such as the mean, variance, or median, can be derived from confidence regions for the multinomial parameter by simply finding the range values assumed by the functional in the confidence region.  This paper also shows that the confidence regions can be used to generate confidence intervals for linear functionals that are tighter, on average, than any known constructions, including Hoeffding bounds, Kullback Leibler divergence-based bounds \cite{garivier2013}, and the empirical Bernstein bound \cite{mnih2008empirical,maurer2009empirical,audibert2009exploration}. The reason for the improved coverage is that, unlike other methods, the confidence intervals account for the shape of the distribution in the simplex.  When compared to standard confidence bounds for the mean based on Bernstein or Hoeffding's inequalities, the constructions can require several times fewer samples to achieve a desired interval width. 

Tight confidence regions and intervals are fundamental to the operation and analysis of many sequential learning algorithms, including reinforcement learning and multi-armed bandits \cite{jamieson2013finding, malloy2014sequential}, guiding both data collection and providing namesake (for example, the Upper Confidence Bounds algorithm of \cite{auer2002finite} and lil'UCB of \cite{jamieson2014lil}).  The performance of such methods hinges on the quality of sequential actions, which in turn depends critically on the width of confidence intervals. If the bounds are too loose, then such sequential algorithms may perform no better than non-adaptive or random action selection.  If they are too aggressive (i.e., invalid confidence bounds), then guarantees are null and algorithms can fail catastrophically. 
This is particularly true in the small sample regime, where sequential learning algorithms have the most to gain over non-adaptive counterparts.  To highlight these advantages, we demonstrate the confidence intervals in a multi-armed bandit setting.

Direct computation of minimal volume regions involves enumerating empirical outcomes and computing partial sums.  In the small sample regime (for example, $n=50$ and $k=5$) computation of the minimal volume regions is straightforward. As computation scales as $O(n^{k})$, this becomes prohibitive for modest $k$.  To aid in computation, we show an outer bound based on the Kullback Leibeler divergence that can be used to accelerate computation of the regions.  We also note that the large sample regime, where computation is prohibitive, is well-served by traditional confidence regions based on asymptotic statistics.

%\subsection{Computational Concerns}

\section{Preliminaries} \label{sec:pre}
Let $X = X_1, \dots, X_n \in \mathcal{X}^n$ be a i.i.d.\ sample of a categorical random variable where $X_i$ takes one of $k$ possible values from a set of categories $\mathcal{X}$.   The empirical distribution of $X$ is the relative proportion of occurrences of each element of $\mathcal{X}$ in $X$.  More precisely, let $\mathcal{X} =: \{x_1, \dots, x_k\}$ and define ${n}_i = \sum_{j=1}^n {\mathbf{1}_{\{ X_j = x_i \} }  }$ for $i = 1,\dots k$.  Then  $\widehat{\boldsymbol{p}}(X) = [{n}_1/n, \dots, n_k/n] \in \Delta_{k,n}$, 
where $\Delta_{k,n}$ is the discrete simplex from $n$ samples over $k$ categories:
\begin{eqnarray} \nonumber
\Delta_{k,n} = \left\{ \widehat{\bp} \in \{0, 1/n, \dots, 1\}^k  :   \sum_i \widehat{p}_i =1  \right\}.
\end{eqnarray}
To simplify notation in what follows, we write $\mathbb{P}_{\boldsymbol{p}}(\widehat{\boldsymbol{p}})$ as shorthand for $\mathbb{P}_{\bp} \left(\left\{ X \in \mathcal{X}^n : \widehat{\bp}(X) = \widehat{\bp}\right\} \right)$ where $\mathbb{P}_{\bp}( \cdot)$ denotes the probability measure under $\bp \in\Delta_{k}$ and  $\Delta_k$ is the $k$-dimensional probability simplex:
\begin{eqnarray} \nonumber
\Delta_k = \left\{\bp \in [0,1]^k :  \sum_i p_i =1  \right\}.
\end{eqnarray}
We refer to the powerset of $\Delta_k$ as $\mathcal{P}(\Delta_k)$, and 
likewise, $\mathcal{P}(\Delta_{k,n})$ as the power set of $\Delta_{k,n}$.
% \bigskip
%\textcolor{red}{some possible notation:} data-generating distribution $\p$, empirical distribution $\ep$, arbitrary distribution $\q$. this basically matches notation used in the C\& C paper, and I think it is more standard and readable.  or $\P$, $\eP$, and $\Q$ 
We also write $\mathbb{P}_{\boldsymbol{p}}(\mathcal{S})$ for $\mathcal{S} \subset \Delta_{k,n}$ as shorthand for  $\mathbb{P}_{\bp} \left(\left\{ X \in \mathcal{X}^n : \widehat{\bp}(X) \in \mathcal{S}\right\} \right)$.  $\mathbb{P}_{\bp}(\widehat{\bp})$ is fully characterized by the multinomial distribution with parameter $\bp \in\Delta_{k}$:
\begin{eqnarray} \nonumber
\mathbb{P}_{\bp}(\widehat{\bp}) 
= \frac{n!}{(n\widehat{p}_1)!  \ldots (n\widehat{p}_k)!} p_1^{n\widehat{p}_1}  \cdots p_k^{n\widehat{p}_k}.
\end{eqnarray}
The parameter $\bp$ specifies the unknown distribution over $\mathcal{X}$.

The focus of this paper is construction of confidence regions for $\bp$ from a sample $X_1, \dots, X_n$.  Since $\widehat{\bp}$ is a sufficient statistic for $X_1, \dots, X_n$, we focus on construction of confidence regions that are functions of $\widehat{\bp}$ with no loss of generality.

\begin{defi} Confidence region.  Let $\mathcal{C}_{\delta}( \widehat{\bp}): \Delta_{k,n} \rightarrow \mathcal{P}(\Delta_{k})$ be a set valued function that maps an observed empirical distribution $\widehat{\bp}$ to a subset of the $k$-simplex. $\mathcal{C}_{\delta}( \widehat{\bp})$   is a \emph{confidence region} at confidence level $1-\delta$ if (\ref{eqn:cr}) holds. \label{def:cr}
\end{defi}

\begin{obs} \label{obs:cc} Equivalent Characterization via Covering Collections.  
Let  $\mathcal{S({\bp})}: \Delta_{k} \rightarrow \mathcal{P}(\Delta_{k,n})$ be given as:
%the preimage of $\mathcal{C}_{\delta}( \widehat{\bp})$: 
%\textcolor{blue}{preimage of $\mathcal{C}_\delta(\widehat{\bp})$ should have domain $\mathcal{P}(\Delta_k)$, maybe we should not use the term preimage?}
\begin{eqnarray}  \label{eqn:sc_val}
\mathcal{S({\bp})} = \left\{ \widehat{ \bp} \in  \Delta_{k,n} : \bp \in \mathcal{C}_{\delta}( \widehat{\bp} ) \right\}.
\end{eqnarray}
Then 
\begin{eqnarray} \label{eqn:cover_eq}
\bp \in \mathcal{C}_\delta(\widehat{\bp}) \Leftrightarrow  \widehat{\bp}  \in \mathcal{S}(\bp)
\end{eqnarray}
and 
\begin{eqnarray} \label{eqn:anycr}
\mathcal{C}_{\delta} (\widehat{\bp}) = \left\{\bp \in\Delta_{k} :  \widehat{\bp} \in \mathcal{S}({\bp}) \right\}.
\end{eqnarray}
\end{obs}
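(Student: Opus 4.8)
The plan is to verify both displayed equalities by directly unwinding the definition of $\mathcal{S}(\bp)$ in (\ref{eqn:sc_val}); this is a purely set-theoretic duality and uses no probabilistic input. First I would establish the biconditional (\ref{eqn:cover_eq}). Fix any $\widehat{\bp} \in \Delta_{k,n}$ and any $\bp \in \Delta_{k}$. By the defining set-builder expression for $\mathcal{S}(\bp)$, membership $\widehat{\bp} \in \mathcal{S}(\bp)$ holds precisely when the predicate $\bp \in \mathcal{C}_{\delta}(\widehat{\bp})$ is true; conversely, if $\bp \in \mathcal{C}_{\delta}(\widehat{\bp})$ then $\widehat{\bp}$ meets the membership condition defining $\mathcal{S}(\bp)$. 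Hence the two statements are logically equivalent, which is exactly (\ref{eqn:cover_eq}).

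Next I would derive (\ref{eqn:anycr}) from (\ref{eqn:cover_eq}). For fixed $\widehat{\bp}$, consider the set $\{\bp \in \Delta_{k} : \widehat{\bp} \in \mathcal{S}(\bp)\}$. Using the equivalence (\ref{eqn:cover_eq}) to replace the condition $\widehat{\bp} \in \mathcal{S}(\bp)$ by $\bp \in \mathcal{C}_{\delta}(\widehat{\bp})$, this set equals $\{\bp \in \Delta_{k} : \bp \in \mathcal{C}_{\delta}(\widehat{\bp})\}$. Since $\mathcal{C}_{\delta}(\widehat{\bp}) \subseteq \Delta_{k}$ by Definition \ref{def:cr}, the restriction to $\Delta_{k}$ is vacuous and the latter set is just $\mathcal{C}_{\delta}(\widehat{\bp})$, giving (\ref{eqn:anycr}).

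Because the argument is a one-line unwinding of a set-builder definition, there is no substantive obstacle here; the only point requiring a moment's care is keeping the two domains straight ($\Delta_{k,n}$ for empirical outcomes, $\Delta_{k}$ for candidate parameters) and invoking $\mathcal{C}_{\delta}(\widehat{\bp}) \subseteq \Delta_{k}$ so that the extra intersection with $\Delta_{k}$ in (\ref{eqn:anycr}) drops out cleanly. The real payoff is downstream: this lets one move freely between a confidence region (a function of the data) and its covering collection $\mathcal{S}(\cdot)$ (a function of the parameter), and I would expect the very next step to recast the coverage requirement (\ref{eqn:cr}) in the dual form $\mathbb{P}_{\bp}(\mathcal{S}(\bp)) \ge 1-\delta$ for every $\bp \in \Delta_{k}$, since that is the form in which the level-set regions are naturally seen to be volume-optimal.
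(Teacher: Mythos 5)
Your proof is correct and matches the paper's treatment: the paper states this observation without proof precisely because it is the definitional unwinding of the set-builder expression that you carry out, and your subsequent remark that coverage becomes $\mathbb{P}_{\bp}(\mathcal{S}(\bp)) \geq 1-\delta$ is exactly the note the paper makes immediately after the observation. No gaps.
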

\noindent We refer to $\mathcal{S}({\bp})$ as a covering collection \cite{chafai2009confidence}, and observe that any confidence region construction can be equivalently expressed in terms of its covering collection according to (\ref{eqn:anycr}).  We also note that for any valid confidence region,  $\mathbb{P}_{\bp} (\mathcal{S}({\bp})) \geq 1-\delta$ holds for all $\bp$,  since $\mathbb{P}_{\bp}  \left( \bp \in \mathcal{C}_{\delta}( \widehat{\bp}) \right)
=    \mathbb{P}_{\bp} \left( \mathcal{S}(\bp)  \right) $ by (\ref{eqn:cover_eq}).

Next we define the \emph{minimal volume confidence region} constructions, which are termed the \emph{level-set} region in \cite{chafai2009confidence}.  The sets are defined in terms of their covering collection.  We note that construction is different from the definition in \cite{chafai2009confidence} to facilitate the main theorem of this paper.  We discuss this difference in Section \ref{sec:dis}.

\begin{defi} \label{def:minvol}
Minimal volume confidence region.  
%Denote $\mathcal{P}(\Delta_{k,n})$ the power set of $\Delta_{k,n}$. 
Let $\mathcal{S^\star({\bp})}: \Delta_{k} \rightarrow \mathcal{P}(\Delta_{k,n})$ be any set valued function that satisfies  
\begin{eqnarray} \label{eqn:s_min}
\mathcal{S}^\star (\bp) = \arg \min_{\left\{\mathcal{S} \in \mathcal{P}(\Delta_{k,n}) : \mathbb{P}_{\bp} ( \mathcal{S}) \geq 1-\delta \right\}} |\mathcal{S}|
\end{eqnarray}
for all $\bp$.  Then the minimal volume confidence region is given as 
\begin{eqnarray} \label{eqn:inv}
\mathcal{C}_{\delta}^\star(\widehat{\bp}) := \left\{\bp \in\Delta_{k} :  \widehat{\bp} \in \mathcal{S}^\star({\bp}) \right\}.
\end{eqnarray}
\end{defi}
\noindent
$\mathcal{S}^\star({\bp})$ is a set valued function, mapping  $\bp$ to a subset of empirical distributions with minimal number of elements among subsets whose probability under $\bp$ equals or exceeds $1-\delta$.   $\mathcal{C}_{\delta}^\star(\widehat{\bp})$ is the subset of the simplex for which the set valued function $\mathcal{S}^\star({\bp})$ includes the observation $\widehat{\bp}$.  

Note that $\mathcal{S}^\star({\bp})$ is in general not unique, and many subsets of $\Delta_{k,n}$ can have minimal cardinality and sufficient probability. As we develop in what follows, any subset of $\Delta_{k,n}$ that satisfies (\ref{eqn:s_min}) must have minimal average volume, and thus, \emph{equal} average volume.   We discuss this in section \ref{sec:dis}. Before proceeding, we note that the construction creates confidence regions with sufficient coverage, by definition.
\begin{obs}
  $\mathcal{C}_{\delta}^\star(\widehat{\bp})$ is a confidence region at level $1-\delta$ since 
$\mathbb{P}_{\bp}  \left( \bp \in \mathcal{C}^\star_{\delta}( \widehat{\bp}) \right)
=    \mathbb{P}_{\bp} \left( \mathcal{S}^\star (\bp)  \right) \geq 1-  \delta$.
\end{obs}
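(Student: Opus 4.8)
The plan is to obtain the coverage guarantee as an immediate consequence of the defining relation (\ref{eqn:inv}) for $\mathcal{C}_{\delta}^\star$ together with the feasibility constraint built into the $\arg\min$ in (\ref{eqn:s_min}).

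First I would record that $\mathcal{S}^\star(\bp)$ is well defined for every $\bp$: the feasible set $\{\mathcal{S}\in\mathcal{P}(\Delta_{k,n}) : \mathbb{P}_{\bp}(\mathcal{S})\geq 1-\delta\}$ is nonempty because $\mathcal{S}=\Delta_{k,n}$ satisfies $\mathbb{P}_{\bp}(\Delta_{k,n})=1\geq 1-\delta$, and since $\Delta_{k,n}$ is finite a cardinality minimizer exists; hence $\mathcal{C}_{\delta}^\star$ in (\ref{eqn:inv}) makes sense. The remaining steps use only that $\mathcal{S}^\star(\bp)$ meets the constraint $\mathbb{P}_{\bp}(\mathcal{S}^\star(\bp))\geq 1-\delta$, so the non-uniqueness of $\mathcal{S}^\star$ noted after Definition~\ref{def:minvol} is immaterial.

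Next I would invoke the biconditional. By the definition (\ref{eqn:inv}) --- equivalently, by the covering-collection equivalence (\ref{eqn:cover_eq}) of Observation~\ref{obs:cc} applied with $\mathcal{S}^\star$ in the role of the covering collection --- we have $\bp\in\mathcal{C}_{\delta}^\star(\widehat{\bp}) \Leftrightarrow \widehat{\bp}\in\mathcal{S}^\star(\bp)$ for all $\bp\in\Delta_k$ and $\widehat{\bp}\in\Delta_{k,n}$. Fixing $\bp$ and applying $\mathbb{P}_{\bp}$ to both sides of this event identity (using the shorthand $\mathbb{P}_{\bp}(\mathcal{S}^\star(\bp)) = \mathbb{P}_{\bp}(\{X\in\mathcal{X}^n : \widehat{\bp}(X)\in\mathcal{S}^\star(\bp)\})$) gives
\[
\mathbb{P}_{\bp}\bigl(\bp\in\mathcal{C}_{\delta}^\star(\widehat{\bp})\bigr) \;=\; \mathbb{P}_{\bp}\bigl(\mathcal{S}^\star(\bp)\bigr) \;\geq\; 1-\delta ,
\]
where the inequality is exactly the constraint in (\ref{eqn:s_min}). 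Taking complements and then the supremum over $\bp\in\Delta_k$ yields $\sup_{\bp\in\Delta_k}\mathbb{P}_{\bp}(\bp\notin\mathcal{C}_{\delta}^\star(\widehat{\bp}))\leq\delta$, which is precisely condition (\ref{eqn:cr}); by Definition~\ref{def:cr} this makes $\mathcal{C}_{\delta}^\star$ a confidence region at level $1-\delta$.

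I do not anticipate any real obstacle: the statement is a bookkeeping consequence of the definitions (indeed the excerpt already supplies the one-line justification inline), and the only points that deserve to be stated explicitly are the well-definedness of $\mathcal{S}^\star$ and the fact that the argument is insensitive to which minimizer is selected in (\ref{eqn:s_min}).
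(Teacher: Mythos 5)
Your proposal is correct and follows exactly the paper's own (one-line, inline) justification: apply the equivalence $\bp\in\mathcal{C}_{\delta}^\star(\widehat{\bp})\Leftrightarrow\widehat{\bp}\in\mathcal{S}^\star(\bp)$ from (\ref{eqn:inv}) and then invoke the feasibility constraint $\mathbb{P}_{\bp}(\mathcal{S}^\star(\bp))\geq 1-\delta$ built into (\ref{eqn:s_min}). The extra remarks on well-definedness of the minimizer and insensitivity to its non-uniqueness are sensible bookkeeping the paper leaves implicit, but they do not change the argument.
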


\section{Results}

%\subsection{Optimal Confidence Regions for the Multinomial Parameter} 
We next proceed to the main result of the paper, which shows that the confidence set of Definition~\ref{def:minvol}, $C_\delta^\star (\widehat{\bp})$, are on average minimal volume among confidence regions at level $1-\delta$.  
\begin{thm} \label{thm:main}
Let $\mathcal{C}_{\delta}^\star(\widehat{\bp})$ be a confidence region given by Definition~\ref{def:minvol} and define $\vol(\cdot)$ as the Lebesgue measure on the simplex $\Delta_{k,n}$.  Then 
\begin{eqnarray} \nonumber
\sum_{\widehat{\bp} \in \Delta_{k,n}} \vol(\mathcal{C}_{\delta}^\star(\widehat{\bp}))   \leq  \sum_{\widehat{\bp} \in \Delta_{k,n}} \vol(\mathcal{C}_{\delta}(\widehat{\bp} )  )
\end{eqnarray}
for any confidence region $\mathcal{C}_{\delta}(\widehat{\bp})$.
\end{thm}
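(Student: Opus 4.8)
\emph{Proof strategy.} The plan is to reduce the comparison of average volumes to a \emph{pointwise} comparison of the sizes of the covering collections, by exchanging the order of the sum over $\widehat{\bp}$ and the integral over $\bp$. (Throughout I read $\vol$ as Lebesgue measure on the continuous simplex $\Delta_k$.)

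First I would rewrite each volume as an integral of an indicator. For an arbitrary confidence region $\mathcal{C}_\delta$ with covering collection $\mathcal{S}$ (as in Observation~\ref{obs:cc}),
\[
\sum_{\widehat{\bp} \in \Delta_{k,n}} \vol\big(\mathcal{C}_\delta(\widehat{\bp})\big) = \sum_{\widehat{\bp} \in \Delta_{k,n}} \int_{\Delta_k} \mathbf{1}\{\bp \in \mathcal{C}_\delta(\widehat{\bp})\}\, d\vol(\bp).
\]
Since $\Delta_{k,n}$ is finite and every integrand is nonnegative, Tonelli's theorem justifies swapping the sum and the integral; then the equivalence (\ref{eqn:cover_eq}), $\bp \in \mathcal{C}_\delta(\widehat{\bp}) \Leftrightarrow \widehat{\bp} \in \mathcal{S}(\bp)$, gives $\sum_{\widehat{\bp}} \mathbf{1}\{\bp \in \mathcal{C}_\delta(\widehat{\bp})\} = |\mathcal{S}(\bp)|$, so that
\[
\sum_{\widehat{\bp} \in \Delta_{k,n}} \vol\big(\mathcal{C}_\delta(\widehat{\bp})\big) = \int_{\Delta_k} |\mathcal{S}(\bp)|\, d\vol(\bp).
\]
Applying the identical computation to $\mathcal{C}_\delta^\star$ and its covering collection $\mathcal{S}^\star$ yields $\sum_{\widehat{\bp}} \vol(\mathcal{C}_\delta^\star(\widehat{\bp})) = \int_{\Delta_k} |\mathcal{S}^\star(\bp)|\, d\vol(\bp)$.

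Next comes the pointwise bound. As noted immediately after Observation~\ref{obs:cc}, validity of $\mathcal{C}_\delta$ forces $\mathbb{P}_\bp(\mathcal{S}(\bp)) \geq 1-\delta$ for every $\bp$; hence $\mathcal{S}(\bp)$ is a feasible point of the cardinality minimization (\ref{eqn:s_min}) defining $\mathcal{S}^\star(\bp)$, and therefore $|\mathcal{S}^\star(\bp)| \leq |\mathcal{S}(\bp)|$ for every $\bp$. Integrating this inequality over $\Delta_k$ against $\vol$ and substituting the two identities above completes the argument.

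The only point that needs care is measurability, and I do not expect it to be a genuine obstacle. For $\vol(\mathcal{C}_\delta(\widehat{\bp}))$ to be defined at all, $\mathcal{C}_\delta(\widehat{\bp})$ must be $\vol$-measurable, which is exactly what is needed to write it as $\int \mathbf{1}\{\bp \in \mathcal{C}_\delta(\widehat{\bp})\}\, d\vol(\bp)$. For the starred region, for each fixed $\mathcal{S} \subseteq \Delta_{k,n}$ the map $\bp \mapsto \mathbb{P}_\bp(\mathcal{S}) = \sum_{\widehat{\bp} \in \mathcal{S}} \mathbb{P}_\bp(\widehat{\bp})$ is a polynomial in $\bp$, hence continuous, so $\{\bp : \mathbb{P}_\bp(\mathcal{S}) \geq 1-\delta\}$ is closed and $\bp \mapsto |\mathcal{S}^\star(\bp)| = \min\{\,|\mathcal{S}| : \mathbb{P}_\bp(\mathcal{S}) \geq 1-\delta\,\}$ is Borel measurable as a minimum over finitely many values; thus $\int_{\Delta_k} |\mathcal{S}^\star(\bp)|\, d\vol(\bp)$ is well defined irrespective of how ties in (\ref{eqn:s_min}) are broken. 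In short, all the content sits in the Tonelli interchange together with the definitional fact that $\mathcal{S}^\star(\bp)$ minimizes $|\mathcal{S}(\bp)|$ pointwise over feasible collections — the interchange is precisely what converts ``minimal for each $\bp$'' into ``minimal on average.''
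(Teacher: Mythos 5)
Your proposal is correct and follows essentially the same route as the paper: rewrite each volume as an integral of an indicator, interchange the sum over $\widehat{\bp}$ with the integral over $\bp$ to obtain $\int_{\Delta_k}|\mathcal{S}(\bp)|\,d\bp$, and then invoke the pointwise minimality $|\mathcal{S}^\star(\bp)|\leq|\mathcal{S}(\bp)|$ from the definition of the covering collection. The additional measurability remarks are a welcome bit of extra care but do not change the substance of the argument.
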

\begin{proof}  
Note that for any confidence region 
\begin{eqnarray} \label{eqn:crux}
\sum_{\widehat{\bp} \in \Delta_{k,n}} \vol( \mathcal{C}_{\delta} (\widehat{\bp}) )= {\int_{\Delta_k} |\mathcal{S}({\bp})| d\bp} 
\end{eqnarray}
since
\begin{eqnarray*}
\sum_{\widehat{\bp} \in \Delta_{k,n}} \vol( \mathcal{C}_{\delta} (\widehat{\bp}) ) &=& \sum_{\widehat{\bp} \in \Delta_{k,n}} \int_{\mathcal{C}_{\delta} (\widehat{\bp}) } d\bp \\
&=& \sum_{\widehat{\bp} \in \Delta_{k,n}} \int_{ \S_k } \mathbbm{1}_{ \{\bp \in \mathcal{C}_{\delta} (\widehat{\bp}) \} } d\bp \\
&=&  \int_{ \S_k } \sum_{\widehat{\bp} \in \Delta_{k,n}}  \mathbbm{1}_{ \{\bp \in \mathcal{C}_{\delta} (\widehat{\bp}) \} } d\bp \\
&=&  \int_{ \S_k } \left \vert \{ \widehat{\bp} :  \bp \in \mathcal{C}_{\delta} (\widehat{\bp}) \} \right \vert d\bp \\
&=&  {\int_{\S_k} |\mathcal{S}({\bp})| d\bp}
\end{eqnarray*}
where last equality follows from (\ref{eqn:anycr}).
By definition, 
$|\mathcal{S}({\bp})| \geq |\mathcal{S}^\star({\bp})|$ for all $\bp$.  This implies
\begin{eqnarray}
{\int_{\S_{k} } |\mathcal{S}({\bp})| d\bp} \;  \geq { \int_{\S_{k}  } |\mathcal{S}^{\star}({\bp})| d\bp }.
\end{eqnarray}
Given that any confidence region construction can be defined in terms of its covering collection according to Observation \ref{obs:cc}, together with (\ref{eqn:crux}), this implies the result. 
\end{proof}

Theorem \ref{thm:main} shows that, averaged over empirical distributions, the confidence regions defined in (\ref{def:minvol}) have minimal volume. 
The main idea of the proof is to count the sum of the Lebesgue measure of the confidence sets in two ways. The LHS in \eqref{eqn:crux} obtains the sum by adding up the shaded areas corresponding to each point in $\S_{k,n}$. The RHS in \eqref{eqn:crux} obtains the same sum by integrating, over all $\bp \in \Delta_k$, the count of elements in $\Delta_{k,n}$ that include $\bp$ in their confidence region (i.e, integrating the size of the covering collection over $\bp$). 
Fig. \ref{fig:exp} can be used to visualize the steps of the proof. 
We next show that if the multinomial parameter is chosen with uniform probability over the simplex, then the optimal properties of the region still apply.

\begin{prop} 
Let $\bp$ be drawn uniformly at random from $\S_k$ and denote $\mathbb{E}_{\bp} $ expectation with respect to the multinomial parameter $\bp$.  Let $\mathcal{C}_{\delta}^\star(\widehat{\bp})$ be a confidence region construction given by Def. (\ref{def:minvol}), then
\begin{eqnarray} \nonumber
\mathbb{E}_{\bp} \left[ \vol(\mathcal{C}_{\delta}^\star(\widehat{\bp})) \right]  \leq  \mathbb{E}_{\bp} \left[  \vol(\mathcal{C}_{\delta}(\widehat{\bp} )  ) \right] 
\end{eqnarray}
for any confidence region $\mathcal{C}_{\delta}(\widehat{\bp})$.
\end{prop}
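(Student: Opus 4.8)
The plan is to reduce the Proposition to Theorem~\ref{thm:main} by showing that, for a uniform prior on $\bp$, the prior-averaged expected volume is a \emph{fixed positive constant} (depending only on $n$ and $k$) times the outcome-sum $\sum_{\widehat{\bp}\in\Delta_{k,n}}\vol(\mathcal{C}_\delta(\widehat{\bp}))$ that Theorem~\ref{thm:main} already controls. Since that sum is minimized by $\mathcal{C}_\delta^\star$, so is any positive multiple of it, and the Proposition follows.

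First I would unfold $\mathbb{E}_{\bp}$ as the joint expectation over $\bp\sim\mathrm{Unif}(\Delta_k)$ and, conditionally, $\widehat{\bp}\sim\mathbb{P}_{\bp}(\cdot)$ on $\Delta_{k,n}$:
\[
\mathbb{E}_{\bp}\!\left[\vol(\mathcal{C}_\delta(\widehat{\bp}))\right]
=\frac{1}{\vol(\Delta_k)}\int_{\Delta_k}\sum_{\widehat{\bp}\in\Delta_{k,n}}\mathbb{P}_{\bp}(\widehat{\bp})\,\vol(\mathcal{C}_\delta(\widehat{\bp}))\,d\bp .
\]
Because $\Delta_{k,n}$ is finite, Tonelli lets me exchange the sum and the integral, so the right-hand side equals
\[
\frac{1}{\vol(\Delta_k)}\sum_{\widehat{\bp}\in\Delta_{k,n}}\vol(\mathcal{C}_\delta(\widehat{\bp}))\left(\int_{\Delta_k}\mathbb{P}_{\bp}(\widehat{\bp})\,d\bp\right).
\]
The key computation is the inner integral. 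Substituting the multinomial pmf and invoking the Dirichlet integral $\int_{\Delta_k}\prod_i p_i^{a_i-1}\,d\bp=\prod_i\Gamma(a_i)/\Gamma\!\big(\sum_i a_i\big)$ with $a_i=n\widehat{p}_i+1$, the factors $(n\widehat{p}_i)!$ cancel against the multinomial coefficient, leaving $\int_{\Delta_k}\mathbb{P}_{\bp}(\widehat{\bp})\,d\bp=n!/(n+k-1)!$, which is \emph{independent of $\widehat{\bp}$}. Equivalently, Dirichlet shows this integral takes the same value for every $\widehat{\bp}$, and summing over $\widehat{\bp}$ together with $\sum_{\widehat{\bp}}\mathbb{P}_{\bp}(\widehat{\bp})\equiv 1$ identifies the common value as $\vol(\Delta_k)/|\Delta_{k,n}|$.

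Pulling this constant $c_{n,k}>0$ out of the sum gives
\[
\mathbb{E}_{\bp}\!\left[\vol(\mathcal{C}_\delta(\widehat{\bp}))\right]
=\frac{c_{n,k}}{\vol(\Delta_k)}\sum_{\widehat{\bp}\in\Delta_{k,n}}\vol(\mathcal{C}_\delta(\widehat{\bp})),
\]
and the identical formula holds with $\mathcal{C}_\delta^\star$ in place of $\mathcal{C}_\delta$; applying Theorem~\ref{thm:main} to the two sums yields the claim. The only step that is not pure bookkeeping is the Dirichlet integral and the observation that $\int_{\Delta_k}\mathbb{P}_{\bp}(\widehat{\bp})\,d\bp$ carries no dependence on $\widehat{\bp}$ — this constancy is precisely what makes the uniform-prior objective proportional to the outcome-averaged objective of Theorem~\ref{thm:main}, so it is the crux of the argument. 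One should also be careful about the meaning of $\mathbb{E}_{\bp}$: the inequality is false if $\widehat{\bp}$ is held fixed (the minimal-volume region is not pointwise minimal over observations), so the expectation must be the joint one described above.
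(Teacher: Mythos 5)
Your proof is correct and follows essentially the same route as the paper: both reduce the prior-averaged volume to the outcome-sum of Theorem~\ref{thm:main} by showing that a uniform prior on $\bp$ induces a uniform marginal on $\widehat{\bp}$ over $\Delta_{k,n}$. The only difference is that you verify this uniformity explicitly via the Dirichlet integral (correctly obtaining $\int_{\Delta_k}\mathbb{P}_{\bp}(\widehat{\bp})\,d\bp = n!/(n+k-1)!$, independent of $\widehat{\bp}$), whereas the paper simply cites the Dirichlet-multinomial compound distribution.
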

\begin{proof}
Suppose $\mathbb{P}_{\bp}(\widehat{\bp}) = \nicefrac{1}{|\S_{k,n}|}$. Then
\begin{align*}
\mathbb{E}_{\bp}[\mu(\mathcal{C}^\star_\delta(\widehat{\bp}))] = \frac{1}{|\S_{k,n}|} \sum_{\widehat{\bp} \in \Delta_{k,n}} \vol(\mathcal{C}_{\delta}^\star(\widehat{\bp}))   \leq \frac{1}{|\S_{k,n}|} \sum_{\widehat{\bp} \in \Delta_{k,n}} \vol(\mathcal{C}_{\delta}(\widehat{\bp} )  ) 
= \mathbb{E}_{\bp} \left[  \vol(\mathcal{C}_{\delta}(\widehat{\bp} )  ) \right], 
\end{align*}
where the inequality is due to Theorem~\ref{thm:main}. Now we show why $\mathbb{P}_{\bp}(\widehat{\bp}) = \nicefrac{1}{|\S_{k,n}|}$. 
%The process described in the statement results in the Dirichlet-Multinomial distribution.
%The proof follows from the observation that 
A multinomial parameter drawn uniformly at random in $\S_k$ induces a uniform distribution over the set of empirical distributions. This is because the resulting distribution on $\widehat \bp$ is the Dirichlet-Multinomial distribution, or a compound Dirichlet distribution \cite{frigyik2010introduction} with a uniform Dirichlet. %The result then follows directly from Theorem \ref{thm:main}. 
\end{proof}

%\subsection{Optimal Construction}

As noted in Sec. \ref{sec:pre}, the minimal volume confidence construction is under-specified.  In general there are many covering collections $\mathcal{S}^\star( \bp)$, each of which results in equal and minimal volume confidence regions.  

A simple way to fully specify the confidence regions is to order the empirical distributions based on their probability under $\bp$ (with ties broken randomly), and construct $\mathcal{S}^{\star}(\bp)$ by including the most probable empirical distributions until a mass of $1-\delta$ is obtained.  This results in covering collections that satisfy (\ref{def:minvol}) and also have an additional guarantee on their coverage probability.  We capture this in the following corollary. 
\begin{prop} \label{cor:doublestar}
For any $\bp$, let $\widehat{\bp}_1, \widehat{\bp}_2 \dots$ be an ordering of the elements of $\Delta_{k,n}$ such that $\mathbb{P}_{\bp}(\widehat{\bp}_1 )\geq \mathbb{P}_{\bp}(\widehat{\bp}_2) \geq \dots$, and let $\ell$ be the smallest integer that satisfies 
\begin{eqnarray}
\sum_{i=1}^\ell  \mathbb{P}_{\bp}(\widehat{\bp }_{i}) \geq 1-\delta.
 \end{eqnarray}
Define
$
\mathcal{S}^{\star \star} (\bp) = \{ \widehat{\bp }_{1}, \dots, \widehat{\bp }_{\ell} \}  $
and
$\mathcal{C}_{\delta}^{\star \star} (\widehat{\bp}) := \left\{\bp \in\Delta_{k} :  \widehat{\bp} \in \mathcal{S}^{\star \star} ({\bp}) \right\}$.
Then 
 \begin{eqnarray} \nonumber
 \mathbb{P}_{\bp} ( \bp \in \mathcal{C}_{\delta}^{\star \star}(\widehat{\bp})  )  \geq  \mathbb{P}_{\bp} ( \bp \in \mathcal{C}_{\delta}^\star(\widehat{\bp})  ) \geq 1-\delta 
\end{eqnarray}
holds for all $\bp$.
\end{prop}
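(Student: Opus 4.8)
The plan is to reduce both probabilities to statements about covering collections using Observation~\ref{obs:cc}, and then to observe that the greedy set $\mathcal{S}^{\star\star}(\bp)$ is precisely the probability-maximizing member among all cardinality-minimizing covering collections at a fixed $\bp$. First I would apply the equivalence \eqref{eqn:cover_eq}: since $\mathcal{S}^{\star\star}(\bp)$ and $\mathcal{S}^\star(\bp)$ are, by construction, the covering collections of $\mathcal{C}^{\star\star}_\delta$ and $\mathcal{C}^\star_\delta$, we get $\mathbb{P}_{\bp}(\bp \in \mathcal{C}^{\star\star}_\delta(\widehat{\bp})) = \mathbb{P}_{\bp}(\mathcal{S}^{\star\star}(\bp))$ and $\mathbb{P}_{\bp}(\bp \in \mathcal{C}^{\star}_\delta(\widehat{\bp})) = \mathbb{P}_{\bp}(\mathcal{S}^{\star}(\bp))$, exactly as in the remark following Observation~\ref{obs:cc}. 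The rightmost inequality $\mathbb{P}_{\bp}(\mathcal{S}^\star(\bp)) \geq 1-\delta$ is then immediate, since the feasible set in \eqref{eqn:s_min} consists only of subsets of mass at least $1-\delta$; this is the Observation already recorded after Definition~\ref{def:minvol}. So the remaining work is the left inequality $\mathbb{P}_{\bp}(\mathcal{S}^{\star\star}(\bp)) \geq \mathbb{P}_{\bp}(\mathcal{S}^{\star}(\bp))$.

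For that, I would first pin down the cardinality of $\mathcal{S}^{\star\star}(\bp)$. Fix $\bp$. Among all subsets of $\Delta_{k,n}$ of a given size $m$, the mass is maximized by the prefix $\{\widehat{\bp}_1,\dots,\widehat{\bp}_m\}$, with value $\sum_{i=1}^{m}\mathbb{P}_{\bp}(\widehat{\bp}_i)$ — a one-line exchange/sorting argument using $\mathbb{P}_{\bp}(\widehat{\bp}_1)\geq\mathbb{P}_{\bp}(\widehat{\bp}_2)\geq\cdots$. By the minimality of $\ell$ we have $\sum_{i=1}^{\ell-1}\mathbb{P}_{\bp}(\widehat{\bp}_i) < 1-\delta$, so no subset of size at most $\ell-1$ meets the coverage constraint, while $\mathcal{S}^{\star\star}(\bp)$ does and has size exactly $\ell$. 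Hence $\ell$ is the minimal feasible cardinality, and therefore $|\mathcal{S}^\star(\bp)| = \ell = |\mathcal{S}^{\star\star}(\bp)|$.

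Finally I would conclude by rearrangement: $\mathcal{S}^\star(\bp)$ is some subset of $\Delta_{k,n}$ of size $\ell$, so $\mathbb{P}_{\bp}(\mathcal{S}^\star(\bp)) \leq \sum_{i=1}^{\ell}\mathbb{P}_{\bp}(\widehat{\bp}_i) = \mathbb{P}_{\bp}(\mathcal{S}^{\star\star}(\bp))$, which chains with the previous paragraph to give $\mathbb{P}_{\bp}(\bp \in \mathcal{C}^{\star\star}_\delta(\widehat{\bp})) \geq \mathbb{P}_{\bp}(\bp \in \mathcal{C}^{\star}_\delta(\widehat{\bp})) \geq 1-\delta$ for every $\bp$. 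I do not expect a real obstacle here; the only step needing slight care is the claim that the greedy prefix maximizes mass over all subsets of a fixed size — and that this in turn forces $\ell$ to equal the minimal cardinality — while everything else is bookkeeping through Observation~\ref{obs:cc} and the definition \eqref{eqn:s_min}.
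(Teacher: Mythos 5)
Your proof is correct and takes essentially the same route as the paper's: both reduce the coverage probabilities to $\mathbb{P}_{\bp}(\mathcal{S}^{\star\star}(\bp))$ and $\mathbb{P}_{\bp}(\mathcal{S}^{\star}(\bp))$ via \eqref{eqn:cover_eq} and then compare the two using the probability ordering. The paper compresses the key comparison to ``by the ordering above,'' whereas you spell out the supporting facts --- that the greedy prefix maximizes mass among subsets of any fixed size, hence $\ell$ is the minimal feasible cardinality and $|\mathcal{S}^{\star}(\bp)|=\ell$, so $\mathbb{P}_{\bp}(\mathcal{S}^{\star}(\bp))\leq\mathbb{P}_{\bp}(\mathcal{S}^{\star\star}(\bp))$ --- which is a fuller account of the same argument, not a different one.
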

\begin{proof}
Since $\mathbb{P}_{\bp} ( \bp \in \mathcal{C}_{\delta}^{\star \star}(\widehat{\bp})  )  = 
\mathbb{P}_{\bp} ( \mathcal{S}^{\star \star}(\bp) )$ by the relationship in (\ref{eqn:cover_eq}), and since $\mathbb{P}_{\bp} (\mathcal{S}^{\star \star}(\bp) ) \geq \mathbb{P}_{\bp} (\mathcal{S}^{\star}(\bp)) $ by the ordering above, the proof follows immediately.
\end{proof}
\noindent
Proposition \ref{cor:doublestar} shows that a particular choice for construction of the covering collection  $\mathcal{S}^{\star \star} (\bp)$ also satisfies a secondary optimality property -- among all confidence regions that have minimal (and equal) average volume, $\mathcal{C}_{\delta}^{\star \star}(\widehat{\bp})$ has maximal coverage probability for all $\bp$.

Proposition \ref{cor:doublestar} highlights the observation that several confidence region constructions have equal average minimal volume.  This occurs because the average is taken over the set of possible empirical distributions.  Provided the minimal cardinality requirement is employed in the construction, the average volume is constant, but the coverage probability may vary. 

Proposition \ref{cor:doublestar} also highlights the difference between the definition of the minimal volume confidence regions defined here, and the level-set construction in \cite{chafai2009confidence}.  In the level-set construction, equiprobable outcomes are either all included or excluded in the covering collections, which precludes the construction from having minimal average volume in this corner case.

\subsection{Confidence Sets for Linear Functionals}

The simplex confidence regions developed above induce optimal confidence sets for linear functionals of the multinomial parameter, such as the mean.  To consider linear functionals we assign numerical values $\{0,1,\dots,k-1\}$ to the vertices of the simplex $\S_k$.   For any 
$\bp\in \S_k$ the mean functional is $m(\bp):=\sum_{i=0}^{k-1} i\, p_i$.
In particular, $\widehat m = m(\widehat p)$ is the empirical mean of $\widehat p$.  Define the confidence set for $\widehat m$ as 
$$\mathcal{C}_{\delta}^\star\big(m(\widehat \bp)\big) = \Big\{m \, : \, \exists \,  \bp\in \mathcal{C}_{\delta}^\star(\widehat{\bp}) \mbox{ such that } m = m(\bp) \Big\} \ . $$
Note that the confidence set $\mathcal{C}_{\delta}^\star\big(m(\widehat \bp)\big)$ depends on $\widehat \bp$, not just the value of the empirical mean $m(\widehat \bp)$. This is crucial since it allows for confidence sets that automatically adapt to distributional characteristics like variance.
Now consider a measure $\mu$ on $\S_k$.  This induces a measure $\mu_m$ on $[0,k-1]$, the range of the mean. Specifically, simply define $\mu_m\big(\mathcal{C}_{\delta}^\star(m(\widehat \bp)) \big):= \mu\big(\mathcal{C}_{\delta}^\star(\widehat{\bp}) \big)$.  Recall, the confidence sets $\mathcal{C}_{\delta}^\star$ have minimum average volume with respect to \emph{any} measure $\mu$.  Therefore, we may choose $\mu$ such that the induced measure on $[0,k-1]$ is uniform. The conclusion is summarized in the following corollary.

\begin{cor}
The confidence sets $\{\mathcal{C}_{\delta}^\star(m(\widehat \bp))\}$ have minimum average Lesbegue measure, where the average is with respect to all possible empirical distributions (depending on $k$ and the sample size $n$).
\end{cor}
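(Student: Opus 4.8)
The plan is to reduce the statement to Theorem~\ref{thm:main} in two steps: first, observing that the proof of Theorem~\ref{thm:main} is insensitive to which measure on $\Delta_k$ one uses; second, transporting a suitably chosen measure on $\Delta_k$ through the linear map $m(\cdot)$ onto the one-dimensional Lebesgue measure of its range $[0,k-1]$.

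First I would record the measure-agnostic form of Theorem~\ref{thm:main}: for \emph{any} measure $\mu$ on $\Delta_k$ and any confidence region $\mathcal{C}_\delta$, one has $\sum_{\widehat{\bp}\in\Delta_{k,n}}\mu(\mathcal{C}_\delta^\star(\widehat{\bp}))\le\sum_{\widehat{\bp}\in\Delta_{k,n}}\mu(\mathcal{C}_\delta(\widehat{\bp}))$. The displayed chain in the proof of Theorem~\ref{thm:main} uses only the Fubini--Tonelli interchange of $\sum_{\widehat{\bp}}$ with $\int\,d\mu$, the identity~(\ref{eqn:anycr}), and the pointwise inequality $|\mathcal{S}(\bp)|\ge|\mathcal{S}^\star(\bp)|$ from Definition~\ref{def:minvol}; none of these invokes any special property of Lebesgue measure, so the argument carries over verbatim.

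Second I would make the ``induced measure'' of the paragraph preceding the corollary precise by disintegrating along the fibers of the mean. Write $\mu=\int_0^{k-1}\nu_t\,d\rho(t)$, where $\rho$ is a measure on $[0,k-1]$ and $\nu_t$ is a probability measure supported on the level set $\{\bp\in\Delta_k:m(\bp)=t\}$. Taking $\rho$ to be one-dimensional Lebesgue measure makes the pushforward of $\mu$ under $m$ equal to Lebesgue measure on $[0,k-1]$, and for a ``tube'' $\mathcal{T}=\{\bp\in\Delta_k:m(\bp)\in A\}$ it yields $\mu(\mathcal{T})$ exactly equal to the Lebesgue measure of $A=m(\mathcal{T})$. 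Because the event $m(\bp)\in A$ coincides with $\bp\in m^{-1}(A)\cap\Delta_k$, any confidence set $\mathcal{D}_\delta(\widehat{\bp})\subseteq[0,k-1]$ for the mean has precisely the coverage probability of its ``lift'' $m^{-1}(\mathcal{D}_\delta(\widehat{\bp}))\cap\Delta_k$, and conversely the $m$-image of any valid simplex confidence region is a valid mean confidence set; hence the class of valid mean confidence sets is exactly the class of $m$-images of valid simplex confidence regions. Feeding the above $\mu$ into the measure-agnostic Theorem~\ref{thm:main}, applied to the lift of an arbitrary valid mean confidence set, then transfers the minimality on $\Delta_k$ to minimality of the averaged one-dimensional Lebesgue measure of $\{\mathcal{C}_\delta^\star(m(\widehat{\bp}))\}$.

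The delicate step, which I expect to be the main obstacle, is that $m(\cdot)$ does not commute with volume: if $S\subseteq\Delta_k$ is not a union of level sets of $m$ then $\mu(S)$ is \emph{strictly} smaller than the Lebesgue measure of $m(S)$, so one cannot simply identify $\mu(\mathcal{C}_\delta^\star(\widehat{\bp}))$ with the Lebesgue measure of $\mathcal{C}_\delta^\star(m(\widehat{\bp}))$. Closing the argument requires showing that an optimal region for this purpose may be taken ``aligned with the level sets of $m$'', i.e. of tube form $m^{-1}(\mathcal{D}^\star_\delta(\widehat{\bp}))\cap\Delta_k$ --- equivalently, that the minimal-volume \emph{tube} confidence region exists and its image under $m$ has the same averaged measure as $\mathcal{C}_\delta^\star(m(\widehat{\bp}))$ and is the advertised optimum for the mean. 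Once this alignment is in place the chain of (in)equalities closes; the remaining verifications --- coverage of the lifted and projected sets, measurability of the disintegration, and finiteness of the constructed $\mu$ --- are routine.
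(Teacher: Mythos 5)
Your overall route is the same as the paper's: invoke the measure-agnostic form of Theorem~\ref{thm:main} with a measure $\mu$ on $\Delta_k$ whose pushforward under $m$ is Lebesgue measure on $[0,k-1]$. The paper's entire proof is the single identity $\sum_{\widehat\bp}\mu_m\big(\mathcal{C}^\star_\delta(m(\widehat\bp))\big)=\sum_{\widehat\bp}\mu\big(\mathcal{C}^\star_\delta(\widehat\bp)\big)$, where $\mu_m$ is \emph{defined} by exactly this equation; your first two steps reproduce that argument faithfully, and your disintegration $\mu=\int\nu_t\,d\rho(t)$ is a cleaner way of exhibiting the required $\mu$ than the paper's explicit Example for $k=3$.

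The difficulty you flag in your last paragraph is genuine, and you are right that your argument does not close it --- but neither does the paper's. If $\mu_m$ is taken to be the honest pushforward $m_*\mu$, then for $S\subseteq\Delta_k$ one has $\mu_m(m(S))=\mu\big(m^{-1}(m(S))\big)\ge\mu(S)$, with equality only when $S$ is (up to $\mu$-null sets) a union of level sets of $m$. Since $\mathcal{C}^\star_\delta(m(\widehat\bp))=m\big(\mathcal{C}^\star_\delta(\widehat\bp)\big)$ and the minimal-volume regions have no reason to be fiber-aligned, the available inequalities point the wrong way: one gets $\sum_{\widehat\bp}\mathrm{Leb}\big(\mathcal{C}^\star_\delta(m(\widehat\bp))\big)\ge\sum_{\widehat\bp}\mu\big(\mathcal{C}^\star_\delta(\widehat\bp)\big)$ on one side and, for a competitor $\mathcal{D}_\delta(\widehat\bp)\subseteq[0,k-1]$ via its lift $m^{-1}(\mathcal{D}_\delta(\widehat\bp))\cap\Delta_k$, $\sum_{\widehat\bp}\mu\big(\mathcal{C}^\star_\delta(\widehat\bp)\big)\le\sum_{\widehat\bp}\mathrm{Leb}\big(\mathcal{D}_\delta(\widehat\bp)\big)$ on the other, and these do not chain into the claimed conclusion. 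The paper sidesteps this only by declaring the key equality to hold by definition of $\mu_m$, which is not a well-posed measure-theoretic definition. So your proposal is incomplete precisely where you say it is: the fiber-alignment step (or an explicit restriction of the optimality claim to tube-form regions, whose $m$-images are then genuinely optimal among mean confidence sets) still has to be supplied. You have not missed an idea that the paper contains; you have identified the step the paper assumes.
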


\begin{proof}
The result follows since
$$\sum_{\widehat \bp} \mu_m\big(\mathcal{C}_{\delta}^\star\big(m(\widehat \bp)\big)\big) \ = \ \sum_{\widehat \bp} \mu\big(\mathcal{C}_{\delta}^\star\big(\widehat \bp\big)\big) \ . $$
\end{proof}

\begin{ex}
The following measure on $\S_3$ induces uniform (Lesbegue) measure on $[0,2]$. The measure is a mixture distribution defined as follows. Let $u \sim \mbox{uniform}[-1,1]$.  If $u\geq 0$, then set $p_0=u$ and $p_2=0$, otherwise set $p_2=-u$ and $p_0=0$. Finally, set $p_1=1-p_0-p_2$. This defines a measure $\mu$ on $\S_3$ such that $m=p_1+2p_2 \sim \mbox{uniform}[0,2]$.
\end{ex}

The results above can be generalized to arbitrary linear functionals and non-uniform induced measures (if desirable).  It may be possible to use the same approach to construct confidence sets for nonlinear functions, but this is left to future work.

\section{Discussion and Extensions} \label{sec:dis}

\subsection{Relationship to Significance Testing}
The confidence regions presented in this paper and in \cite{chafai2009confidence} are closely related to $p$-values in statistical significance testing.  Often, the phrase $p$-value is used to describe an approximate $p$-value based on a normal approximation.  A more precise interpretation of a $p$-value can be related to the construction of $\mathcal{C}_\delta(\widehat{\bp})$.
\begin{defi} $p$-value. The $p$-value of an outcome $\widehat{\bp}$ (under the hypothesis $\bp$) is:
\begin{eqnarray} \nonumber
p(\widehat{\bp}; \bp) = \sum_{\widehat{\bq} \in \Delta_{k,n} : \mathbb{P}_{\bp}(\widehat{\bq}) \leq \mathbb{P}_{\bp}(\widehat{\bp})  } \mathbb{P}_{\bp}  \left( \widehat{\bq}  \right).
\end{eqnarray}
\end{defi}
\noindent 
A $p$-value has the following interpretation in statistical significance testing: $p$ is the \emph{probability that the observed outcome or something less probable} occurred under the hypothesis $\bp$.   A small $p$-value corresponds to a strange outcome under the null, and thus corresponds to rejection of the null hypothesis.  The level-set confidence regions described in this paper and in \cite{chafai2009confidence} can be stated in terms of covering collection based on $p$-values: $\mathcal{C}_\delta(\bp) = \{ \bp : p(\widehat{\bp}; \bp ) > \delta \}$.

We note that the level-set confidence regions and their expressions herein are closely related to `exact' confidence regions defined in \cite{blyth1983binomial} for the specific case when $k=2$.  The confidence region defined by an exact test is the \emph{range of parameters over which the outcome is not statistically significant at a p-value of $1-\delta$}.  Extending this to the multinomial setting is the essence of the level-set confidence regions.

\subsection{Relationship to Sanov Confidence Regions}
Sanov's theorem (Theorem 11.4.1 in \cite{cover2012elements}) allows us to bound the probability of observing a set of empirical distributions using its Kullback Leibler distance to the data-generating distribution. Since the statement of the theorem involves an infimum over Kullback Leibler distances, we can use it to obtain the following inequality:
\begin{eqnarray*} \nonumber
\mathbb{P}_{\bp}(\mathrm{KL}(\widehat{\bp}, \bp) > z) \leq (n+1)^ke^{-nz}  \\
\end{eqnarray*}
which implies
\begin{eqnarray*}
\mathbb{P}_{\bp}\left(\mathrm{KL}(\widehat{\bp}, \bp) \leq \frac{\log((n+1)^k / \delta)}{n}\right) \geq 1-\delta
\end{eqnarray*}
where 
\begin{equation} \nonumber
\mathrm{KL}(\bp, \bp') := \sum_{i=1}^{k} p_i \log\left(\frac{p_i}{p_i'} \right)
\end{equation}
is the Kullback Leibler divergence. One can view the previous inequality as a concentration result for the Kullback Leibler divergence between the observed empirical distribution and the true distribution. The work done in \cite{mardia2018concentration}  has sharpened these types of results in several parameter ranges. For example, when $k \leq e\sqrt[3]{n/8\pi}$, \cite{mardia2018concentration} shows that
\begin{align*} \nonumber
&\mathbb{P}_{\bp}(\mathrm{KL}(\widehat{\bp}, \bp) > z) \leq 2(k-1)e^{-nz/(k-1)}\\  
\end{align*}
implies
\begin{align}
\label{eqn:mer_kl}
\mathbb{P}_{\bp}\left(\mathrm{KL}(\widehat{\bp}, \bp) \leq (k-1)\frac{\log(2(k-1) / \delta)}{n}\right) \geq 1-\delta.
\end{align}
Thus using Sanov's theorem gives us a choice for a confidence region of level $1-\delta$. Another approach used by \cite{nowak2019tighter} to obtain a confidence region is to obtain bounds on the marginal probabilities $\{p_i : i \in \{1, 2, \ldots, k\}\}$. This can be done as $n\hat{p}_i$ corresponds to $n$ i.i.d.\ realizations of a Bernoulli random variable having mean as $p_i$. By allocating $\delta/k$ error probability in bounding each of the marginal parameters, we get using the Bernoulli-KL inequality \cite{garivier2011kl} that for each $i \in \{1, 2, \ldots, k\}$
\begin{align} \label{eqn:bern_kl}
\mathbb{P}_{p_i}(\mathrm{KL}([\hat{p}_i, 1 - \hat{p}_i], [p_i, 1-p_i]) > z)
\leq 2e^{-nz} 
\end{align}
which implies 
\begin{align} \nonumber
\mathbb{P}_{\bp}\left( \bigcap_i \mathrm{KL}([\hat{p}_i, 1 - \hat{p}_i], [p_i, 1-p_i]) \leq \frac{\log(2k/\delta)}{n} \right)
\geq 1-\delta.
\end{align}

%\begin{align}
%\mathbb{P}_{\bp}\left( \mathrm{KL}([\hat{p}_i, 1 - \hat{p}_i], [p_i, 1-p_i]) %\leq \frac{\log(2k/\delta)}{n} \forall j \in \{1, 2, \ldots, k\}\right) \geq 1-\delta.
%\end{align}
Both (\ref{eqn:mer_kl}) and (\ref{eqn:bern_kl}) give us valid confidence regions for the multinomial parameter. We plot these regions along with the proposed region in Figure~\ref{fig:singleCB} in Sec. \ref{sec:numerical}.

\subsection{Computation}\label{sec:exp}

%Direct computation of level-set regions involves enumerating all empirical outcomes and computing partial sums.  In the small sample regime (for example, $n=1000$ and $k=3$) computation of the level-set regions is straightforward. As computation scales as $n^{k}$, this becomes prohibitive for modest $k$.  

%To aid in computation, we show an outer bound based on the Kullback Leibeler divergence that can be used to accelerate computation of the regions. 

Computation of $\mathcal{C}_\delta^{\star} (\widehat{\bp})$ requires enumerating all empirical outcomes and computing partial sums.  
In our experiments, enumerating and ordering the empirical distributions for $k=5$ and $n=50$ and checking membership in $\mathcal{C}_\delta^{\star} (\widehat{\bp})$ completes in around two seconds on a modern laptop.  Regardless, as computation scales as $n^{k}$, computation of membership in $\mathcal{C}_\delta^{\star} (\widehat{\bp})$ becomes  prohibitive for a modest number of categories.
We note that the large sample regime, which is not the focus of the work here, is served well by traditional confidence regions based on asymptotic statistics.

There are a number of ways in which computation of the proposed confidence regions can be accelerated.  First, in the numerical experiments, we use the approximate $p$-values returned by Pearson's $\chi^2$ test to obtain a course estimate of the confidence regions, and refine it using exhaustive computation only when needed. 

Next, to further aid in computation, we show an outer bound based on the Kullback Leibler divergence that can be used to accelerate computation of the regions.  The bound provides a way to confirm if a particular $\bp$ is outside $\mathcal{C}_\delta^{\star} (\widehat{\bp})$.  
\begin{thm}
\emph{Outer bound.} The following inequality holds:
\begin{eqnarray} \nonumber
p(\widehat{\bp};\bp) \leq  (n+1)^{2k} \exp\left(-n \ \mathrm{KL}(\widehat{\bp}, \bp)   \right)
\end{eqnarray}
\end{thm}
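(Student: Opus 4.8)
The plan is to bound the $p$-value $p(\widehat{\bp};\bp) = \sum_{\widehat{\bq}\,:\,\mathbb{P}_{\bp}(\widehat{\bq})\leq \mathbb{P}_{\bp}(\widehat{\bp})} \mathbb{P}_{\bp}(\widehat{\bq})$ by a Sanov-type argument, but done carefully enough to peel off the explicit constant $(n+1)^{2k}$. First I would note the standard combinatorial identity for the multinomial probability of a type: writing $\mathbb{P}_{\bp}(\widehat{\bq}) = \binom{n}{n\widehat q_1,\dots,n\widehat q_k} \prod_i p_i^{n\widehat q_i}$, and using the elementary bound that a multinomial coefficient for a type of $n$ samples over $k$ categories is at most $e^{nH(\widehat{\bq})}$ (and, crudely, also at most $(n+1)^{k}$ times the largest term, but here the cleaner route is the entropy bound), one gets $\mathbb{P}_{\bp}(\widehat{\bq}) \leq e^{-n\,\mathrm{KL}(\widehat{\bq},\bp)}$ directly from $\binom{n}{n\widehat q_1,\dots,n\widehat q_k}\prod_i p_i^{n\widehat q_i} \leq e^{nH(\widehat{\bq})}\prod_i p_i^{n\widehat q_i} = e^{-n\mathrm{KL}(\widehat{\bq},\bp)}$.

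Next, for every $\widehat{\bq}$ appearing in the $p$-value sum we have $\mathbb{P}_{\bp}(\widehat{\bq}) \leq \mathbb{P}_{\bp}(\widehat{\bp})$, so I would split the bound: on one hand $\mathbb{P}_{\bp}(\widehat{\bq})\leq \mathbb{P}_{\bp}(\widehat{\bp})$, and on the other hand I want to compare $\mathbb{P}_{\bp}(\widehat{\bp})$ to $e^{-n\mathrm{KL}(\widehat{\bp},\bp)}$. Here the subtlety is the wrong direction: the entropy bound gives $\mathbb{P}_{\bp}(\widehat{\bp})\leq e^{-n\mathrm{KL}(\widehat{\bp},\bp)}$, which is what we ultimately want on the RHS, so I would instead bound the number of summands. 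The cardinality of $\Delta_{k,n}$ is $\binom{n+k-1}{k-1}\leq (n+1)^{k-1}\leq (n+1)^k$. Then $p(\widehat{\bp};\bp) \leq |\Delta_{k,n}|\cdot \mathbb{P}_{\bp}(\widehat{\bp})$ is too weak because $\mathbb{P}_{\bp}(\widehat{\bp})$ need not be small; so the right move is: each term $\mathbb{P}_{\bp}(\widehat{\bq})$ is at most $\min\{\mathbb{P}_{\bp}(\widehat{\bp}),\, e^{-n\mathrm{KL}(\widehat{\bq},\bp)}\}$, but the cleanest is to use $\mathbb{P}_{\bp}(\widehat{\bq}) \leq \mathbb{P}_{\bp}(\widehat{\bp})$ together with a lower bound $\mathbb{P}_{\bp}(\widehat{\bp}) \geq (n+1)^{-k} e^{-n\mathrm{KL}(\widehat{\bp},\bp)}$ — no, that inequality is false in general either. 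The correct and robust route: bound $\mathbb{P}_{\bp}(\widehat{\bq})\le \mathbb{P}_{\bp}(\widehat{\bp})$ for each of the at most $(n+1)^k$ summands, giving $p(\widehat{\bp};\bp)\le (n+1)^k\,\mathbb{P}_{\bp}(\widehat{\bp})$, and separately use $\mathbb{P}_{\bp}(\widehat{\bp}) \le e^{-n\mathrm{KL}(\widehat{\bp},\bp)}$. These two combine to give $(n+1)^k e^{-n\mathrm{KL}(\widehat{\bp},\bp)}$, which is actually stronger than the claimed $(n+1)^{2k}$; the extra factor $(n+1)^k$ in the statement presumably comes from a looser but more symmetric argument (e.g. also bounding $|\Delta_{k,n}|$ inside a Sanov application to the whole set $\{\widehat{\bq}:\mathbb{P}_{\bp}(\widehat{\bq})\le \mathbb{P}_{\bp}(\widehat{\bp})\}$ via its KL-closest point, which need not be $\widehat{\bp}$). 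So the plan: (i) apply Sanov's theorem (Thm 11.4.1 of \cite{cover2012elements}) to the set $E = \{\widehat{\bq}:\mathbb{P}_{\bp}(\widehat{\bq})\le \mathbb{P}_{\bp}(\widehat{\bp})\}$ to get $p(\widehat{\bp};\bp)=\mathbb{P}_{\bp}(E)\le (n+1)^k e^{-n\inf_{\widehat{\bq}\in E}\mathrm{KL}(\widehat{\bq},\bp)}$; (ii) argue $\widehat{\bp}\in E$ trivially, but to lower-bound the infimum by $\mathrm{KL}(\widehat{\bp},\bp)$ minus slack, observe that any $\widehat{\bq}\in E$ satisfies $e^{-n\mathrm{KL}(\widehat{\bq},\bp)}\cdot(\text{mult. coeff.}) = \mathbb{P}_{\bp}(\widehat{\bq}) \le \mathbb{P}_{\bp}(\widehat{\bp})\le e^{-n\mathrm{KL}(\widehat{\bp},\bp)}$, and since the multinomial coefficient is $\ge 1$ (indeed $\ge (n+1)^{-k}e^{nH(\widehat{\bq})}$ by the type-counting lower bound, but $\ge 1$ suffices loosely) we get $\mathrm{KL}(\widehat{\bq},\bp) \ge \mathrm{KL}(\widehat{\bp},\bp) - \frac{k\log(n+1)}{n}$; (iii) substitute into (i) to obtain $p(\widehat{\bp};\bp)\le (n+1)^k\cdot e^{k\log(n+1)}\cdot e^{-n\mathrm{KL}(\widehat{\bp},\bp)} = (n+1)^{2k}e^{-n\mathrm{KL}(\widehat{\bp},\bp)}$.

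I would expect the main obstacle to be step (ii): relating $\inf_{\widehat{\bq}\in E}\mathrm{KL}(\widehat{\bq},\bp)$ to $\mathrm{KL}(\widehat{\bp},\bp)$. The set $E$ is defined by a probability threshold, not directly by a KL threshold, so the two differ exactly by the log of the multinomial coefficient, which is where the $(n+1)^k$ slack enters; getting the bookkeeping right — in particular using the type-size bounds $\frac{1}{(n+1)^k}e^{nH(\widehat{\bq})} \le \binom{n}{n\widehat q_1,\dots,n\widehat q_k} \le e^{nH(\widehat{\bq})}$ from \cite[Ch.~11]{cover2012elements} in the correct directions — is the crux. Everything else (Sanov, the cardinality bound $|\Delta_{k,n}|\le(n+1)^k$, and $\widehat{\bp}\in E$) is routine. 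As a sanity check, the resulting bound is vacuous unless $\mathrm{KL}(\widehat{\bp},\bp) > \tfrac{2k\log(n+1)}{n}$, which is the regime in which one would actually use it to certify $\bp\notin\mathcal{C}_\delta^\star(\widehat{\bp})$, consistent with the stated purpose of accelerating the computation.
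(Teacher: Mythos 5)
Your final plan (i)--(iii) is exactly the paper's proof: apply Sanov's theorem (Theorem 11.4.1 of the cited reference) to the set $E=\{\widehat{\bq}:\mathbb{P}_{\bp}(\widehat{\bq})\leq \mathbb{P}_{\bp}(\widehat{\bp})\}$ to pick up one factor of $(n+1)^k$, then use the two-sided type-probability bounds $\frac{1}{(n+1)^k}e^{-n\,\mathrm{KL}(\widehat{\bq},\bp)}\leq \mathbb{P}_{\bp}(\widehat{\bq})\leq e^{-n\,\mathrm{KL}(\widehat{\bq},\bp)}$ to show $\min_{\widehat{\bq}\in E}\mathrm{KL}(\widehat{\bq},\bp)\geq \mathrm{KL}(\widehat{\bp},\bp)-\frac{k}{n}\log(n+1)$, which contributes the second factor of $(n+1)^k$. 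Your identification of step (ii) as the crux, and of the probability-threshold-versus-KL-threshold mismatch as the source of the extra $(n+1)^k$, matches the paper's own remark after the theorem. One quibble: the phrase ``since the multinomial coefficient is $\geq 1$'' is not what you actually need there --- the relevant factor is $\binom{n}{n\widehat{q}_1,\dots,n\widehat{q}_k}e^{-nH(\widehat{\bq})}\in[(n+1)^{-k},1]$, and it is the \emph{lower} bound $(n+1)^{-k}$ on this quantity (equivalently, the lower bound in the displayed two-sided type inequality) that produces the slack term; you do cite the correct type-size bounds at the end, so this is a wording issue rather than a gap. More interestingly, the ``robust route'' you describe in passing --- each of the at most $(n+1)^k$ summands is bounded by $\mathbb{P}_{\bp}(\widehat{\bp})\leq e^{-n\,\mathrm{KL}(\widehat{\bp},\bp)}$ --- is correct, strictly simpler, and yields the tighter constant $(n+1)^k$ in place of $(n+1)^{2k}$; it bypasses Sanov entirely and would improve the stated theorem.
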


%\begin{thm} \label{thm:bnd1}
%Let $\mathcal{S} \subset \Delta_{k,n}$ be a set of empirical distributions that satisfies $ \mathbb{P}_{{\bp}}(\widehat{\bq}) \leq \mathbb{P}_{{\bp}}(\widehat{\bp})$ for all $\widehat{\bq} \in \mathcal{S}$.
%%$\sup_{\widehat{P} \in \mathcal{S}} \mathbb{P}_{{Q}}(\widehat{P})   \mathbb{P}_{{Q}}(\widehat{P_{\ell}) $
%Then, 
%\begin{align}
%\min_{ \widehat{\bq} \in \mathcal{S} }  \mathrm{KL}(\widehat{\bq},\bp)
% \geq \mathrm{KL}(\widehat{\bp},\bp) - \frac{k}{n} \log (n+1).
%\end{align}
%\end{thm}

\begin{proof}
From \cite{cover2012elements} (Theorem 11.1.4),  we can bound the probability of any empirical distribution $\widehat{\bq}$ under $\bp$:
\begin{align} \label{eqn:typebnd} 
\frac{1}{(n+1)^k} \exp\left(-n \mathrm{KL}(\widehat{\bq},\bp) \right) 
\leq \mathbb{P}_{\bp}(\widehat{\bq})
\leq \exp \left(-n \mathrm{KL}(\widehat{\bq},\bp) \right).
\end{align}
Thus, for any $\mathbb{P}_{\bp}(\widehat{\bq}) \leq \mathbb{P}_{\bp} (\widehat{\bp})$, 
\begin{align*}
\frac{1}{(n+1)^k} \exp\left(-n \mathrm{KL}(\widehat{\bq},\bp)\right) \leq \exp\left(-n \mathrm{KL}(\widehat{\bp},\bp)\right)
\end{align*}
which implies the following. %result, 
Let $\mathcal{S} \subset \Delta_{k,n}$ be a set of empirical distributions that satisfies $ \mathbb{P}_{{\bp}}(\widehat{\bq}) \leq \mathbb{P}_{{\bp}}(\widehat{\bp})$ for all $\widehat{\bq} \in \mathcal{S}$.
%$\sup_{\widehat{P} \in \mathcal{S}} \mathbb{P}_{{Q}}(\widehat{P})   \mathbb{P}_{{Q}}(\widehat{P_{\ell}) $
Then, 
\begin{align}\label{thm:bnd1}
\min_{ \widehat{\bq} \in \mathcal{S} }  \mathrm{KL}(\widehat{\bq},\bp)
 \geq \mathrm{KL}(\widehat{\bp},\bp) - \frac{k}{n} \log (n+1).
\end{align}
%completing the proof of Theorem \ref{thm:bnd1}.  
Next, we require Sanov's Theorem, \cite{cover2012elements} (Theorem 11.4.1), which states the following.  Let $\mathcal{S} \subset \Delta_{k,n}$ be a set of empirical distributions. %(with $n$ samples over $k$ categories).  
Then
\begin{align} \label{eqn:sanov}
\mathbb{P}_{\bp}(\mathcal{S}) \leq  (n+1)^k \exp\left(-n  \min_{\widehat{\bq} \in \mathcal{S}}  \mathrm{KL}(\widehat{\bq},\bp)\right).
\end{align}
Choosing $\mathcal{S} =\{\widehat{\bq} \in \Delta_{k,n} : \mathbb{P}_{\bp}(\widehat{\bq}) \leq \mathbb{P}_{\bp}(\widehat{\bp}) \}$ and combining \eqref{thm:bnd1} and (\ref{eqn:sanov}), we conclude
\begin{align*}
p(\widehat{\bp};\bp) = 
% &&  \\
% && \hspace{-2cm} 
\sum_{\widehat{\bq} \in \Delta_{k,n} : \mathbb{P}_{\bp}(\widehat{\bq}) \leq \mathbb{P}_{\bp}(\widehat{\bp})     } \mathbb{P}_{\bq } \left( \widehat{\bq}  \right)
 \leq  (n+1)^{2k} e^{\left(-n \mathrm{KL}(\widehat{\bp},\bp)   \right)} .
\end{align*}
\end{proof}

Note that the above bound has and additional factor of two in the second term, beyond what arises from directly inverting Sanov's Theorem \cite{cover2012elements}.  This arises from the fact that $ \widehat{\bp}$ is not necessarily the minimal empirical distribution in KL divergence, i.e, it is not necessary true that $\widehat{\bp}$ equals  
\begin{eqnarray}
\arg \min_{\{\widehat{\bq} \in \Delta_{k,n} : \mathbb{P}_{\bp}(\widehat{\bq}) \leq \mathbb{P}_{\bp} (\widehat{\bp}) \}} \mathrm{KL}(\widehat{\bq},\bp).
\end{eqnarray}

\subsection{Numerical Experiments} \label{sec:numerical}

\begin{figure}
\centering
\includegraphics[width=0.60\textwidth]{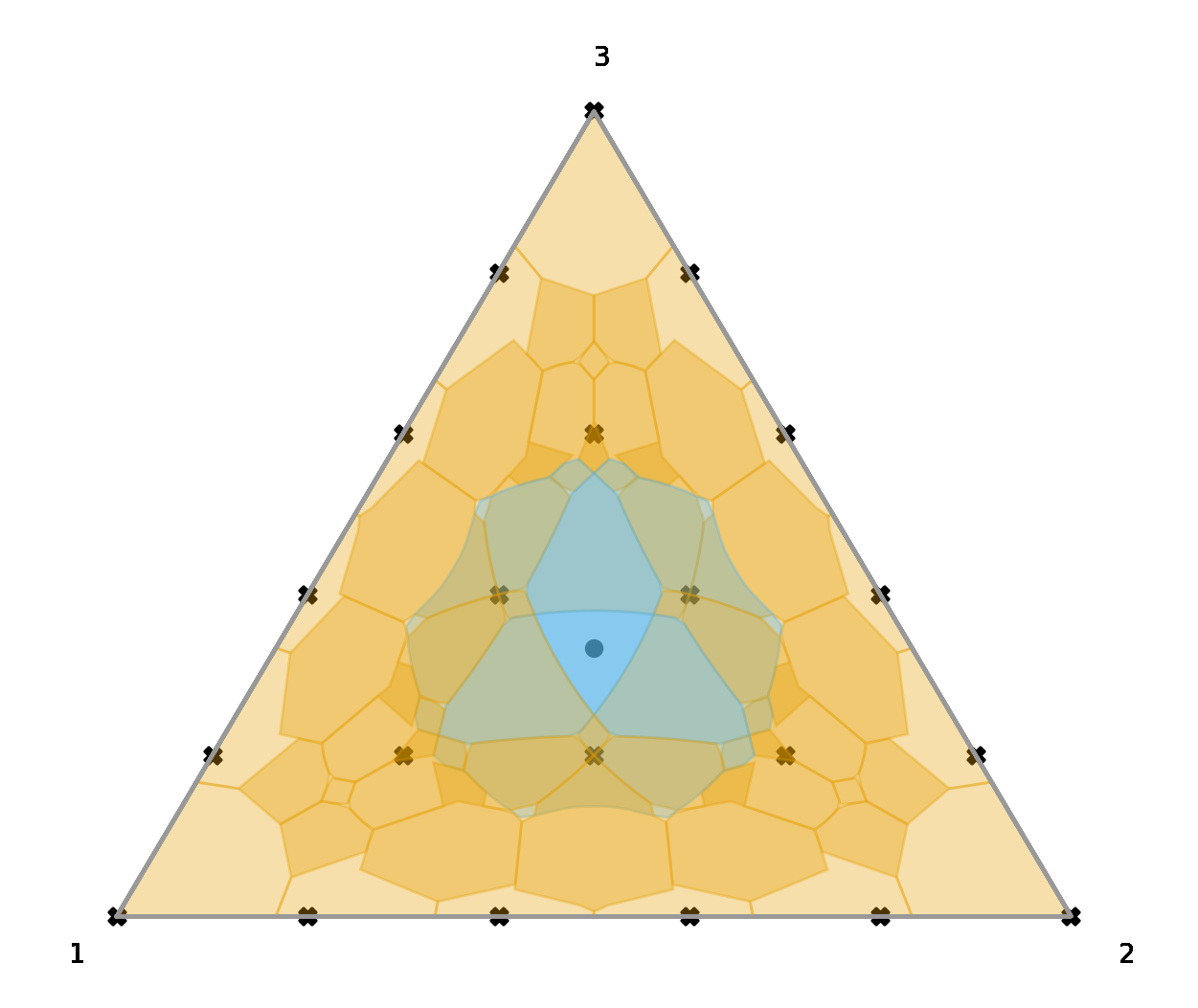}
\caption{All confidence regions $\{\mathcal{C}^{\star \star}_{0.7}(\widehat{\bp}) : \widehat{\bp} \in \Delta_{3,5}\}$ are shaded over a picture of the three dimensional simplex. 
The figure depicts the 3-simplex with black crosses indicating the empirical proportions that could be observed in 5 trials (their total number is $\binom{5+3-1}{3-1}=21$). The number of regions that cover a parameter in $\Delta_3$ vary based on where the parameter lies within the simplex. 
As an example, the uniform parameter $\bp_{u} = [\nicefrac{1}{3}, \nicefrac{1}{3}, \nicefrac{1}{3}]$ is shown by a blue dot in the center of the simplex.  $\bp_u$ is covered by the confidence regions of three empirical distributions: $\mathcal{S}^{\star \star}(\bp_u) = \{ [\nicefrac{1}{5}, \nicefrac{2}{5}, \nicefrac{2}{5}], [\nicefrac{2}{5}, \nicefrac{1}{5}, \nicefrac{2}{5}], [\nicefrac{2}{5}, \nicefrac{2}{5}, \nicefrac{1}{5}] \}$.  The confidence regions associated with these three empirical distributions are indicated in blue. 
The main idea in the proof of Thm. \ref{thm:main} is to count the sum of Lebesgue measure of the confidence sets in two ways. The LHS in \eqref{eqn:crux} obtains the sum by adding up the shaded areas corresponding to each point in $\S_{3,5}$. The RHS in \eqref{eqn:crux} obtains the same sum by integrating, over all $\bp \in \S_3$, the count of elements in $\S_{3,5}$ that include $\bp$ in their confidence region (i.e, integrating the size of the covering collection over $\bp$). 
\label{fig:exp}}
\end{figure}

We begin with a visualization of the proposed confidence regions $\mathcal{C}^{\star \star}_\delta(\widehat{\bp})$ for a small scale experiment with $n = 5$ samples of a $k = 3$ categorical random variable. 
Figure~\ref{fig:exp} shows the confidence regions at level $1 - \delta = 0.3$ for all possible empirical distributions in the discrete simplex $\Delta_{3,5}$ overlaid on top of each other. We also show the uniform parameter $[\nicefrac{1}{3}, \nicefrac{1}{3}, \nicefrac{1}{3}] \in \Delta_3$ and indicate the regions that include it at the chosen confidence level, i.e., its covering collection. In this example, from the figure, we can see that $|\mathcal{S}^{\star \star}([\nicefrac{1}{3}, \nicefrac{1}{3}, \nicefrac{1}{3}])| = 3$.

Next, in Fig. \ref{fig:singleCB}, we show an illustration of the proposed region contrasted with the Sanov and polytope confidence regions of \eqref{eqn:mer_kl} and \eqref{eqn:bern_kl} for a different set of problem parameters.  The illustration highlights the significant difference in volume of the proposed region when compared against the Sanov and polytope regions. 

\begin{figure}
\centering
\includegraphics[width=0.50\textwidth]{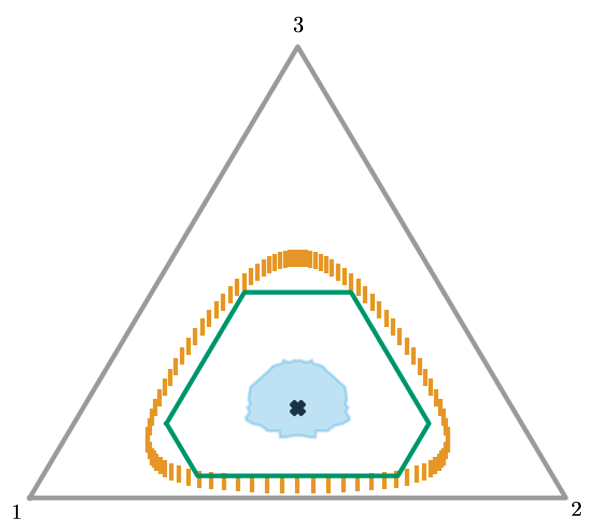}
\caption{Proposed confidence region (Proposition~\ref{cor:doublestar}) shown in blue with the Sanov confidence region \eqref{eqn:mer_kl} in orange and the polytope confidence region \eqref{eqn:bern_kl} in green. The black cross is the observed empirical distribution $\widehat{\bp} = [\nicefrac{6}{15}, \nicefrac{6}{15}, \nicefrac{3}{15}]$ of $15$ realizations of a categorical random variable. All confidence regions are shown at $30\%$ confidence level.%Comparison of three confidence regions. 
\label{fig:singleCB}}
\end{figure}

In Fig. \ref{fig:widths}, we illustrate the power of the level-set construction for linear functionals by compare confidence intervals for the mean. The classical Chernoff bound and Hoeffding's inequality are standard textbook examples that bound deviations of the empirical mean from the true mean.  These are sometimes useful in algorithm analysis, but often too loose in practice \cite{langford2005tutorial}, since they essentially assume the worst case variance.  Refinements such as the KL-Bernoulli  bound \cite{langford2005tutorial,garivier2011kl} can be significantly better, especially in cases where the true mean is close to the extremes, e.g.,  $0$ or $1$ in the case of random variable in $[0,1]$. These bounds have shown theoretical and empirical improvement in multi-armed bandit algorithms \cite{garivier2011kl,tanczos2017kl}. Bernstein's inequality offers  potential for improvement, by taking the underlying  scale/variance into account.  The empirical Bernstein bound \cite{mnih2008empirical,maurer2009empirical,peel2010empirical,audibert2009exploration,balsubramani2016sequential}
uses an estimate of the variance to tighten confidence intervals on the mean.  For sufficiently large sample sizes, this bound can be significantly better than those mentioned above, showing that additional information about the shape of the distribution can be helpful in improving bounds. The empirical Bernstein bound is quite loose in small sample regimes, which significantly reduces its practicality.  

The level-set construction proposed in this paper can require several times fewer samples to achieve a specific confidence interval width when compared with the approaches described above. This implies that the sample complexity or regret of bandit and reinforcement learning algorithms can be reduced by a similar factor \cite{tanczos2017kl}.
We demonstrate this by plotting the widths of these methods with increasing sample size in Figure \ref{fig:widths} .

\begin{figure}
\centering
\includegraphics[width=0.54 \textwidth]{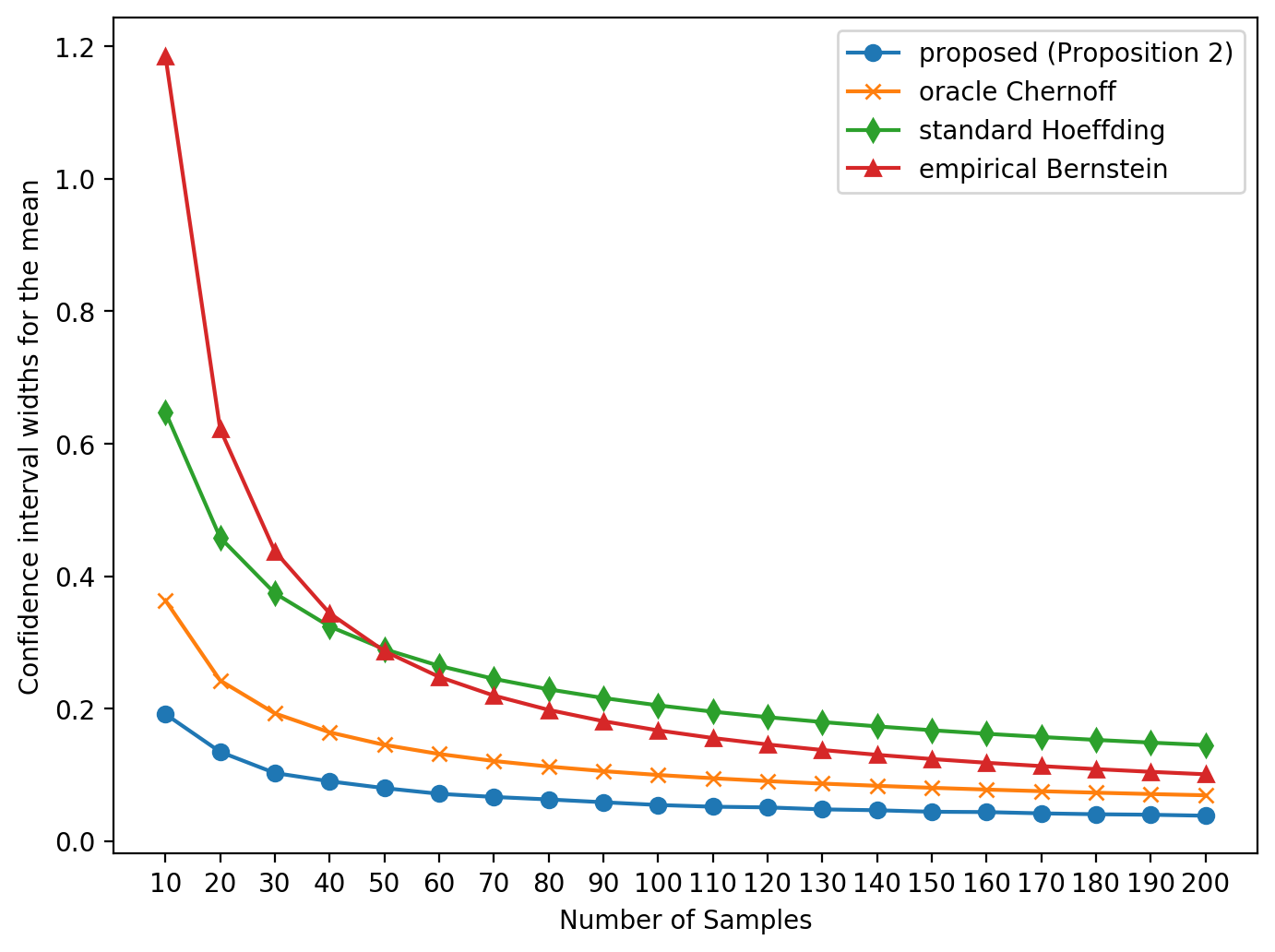}
\caption{Confidence interval widths vs.\ sample size $n$ for $\widehat \bp = (n/10,n/10,8n/10)$ for the mean functional that maps $p_1$ to $0$, $p_2$ to $1/2$, and $p_3$ to $1$.   The level-set construction of confidence intervals is compared to intervals based on subGaussian/Chernoff and empirical Bernstein bounds.  The ``oracle'' Chernoff interval is based on the true variance, rather than the worst-case upper bound. This essentially illustrates the best possible interval one could obtain using sub-Gaussian tail bounds (given perfect knowledge of the scale). The gap between its width and that of the level-set method is due to Markov's inequality in the Chernoff bound. All confidence intervals are at $30\%$ confidence level. \label{fig:widths}}
\end{figure}

%\subsection{Best arm identification experiment}
We demonstrate an experiment where using the proposed confidence regions allows us to identify the best among five arms using fewer samples than other baseline methods. The arms are $3$-category pmfs, shown in the table below.
\begin{align*}
\begin{matrix}
\text{Arms} & 1\text{-star} & 2\text{-star} & 3\text{-star} \\
1 & 0.1 & 0.6 & 0.3 \\
2 & 0.3 & 0.6 & 0.1\\
3 & 0.4 & 0.5 & 0.1\\
4 & 0.6 & 0.3 & 0.1\\
5 & 0.7 & 0.2 & 0.1
\end{matrix}
\end{align*}
We run the LUCB algorithm \cite{Kaufmann13information} with tolerance level $0$ and confidence level $0.95$. We used three different methods of constructing the upper and lower confidence bounds. Figure~\ref{fig:boxplot} summarizes the stopping times for each of those confidence bounds. Note that the sampling and stopping strategies are the same for each of the three cases, and the improvement in number of samples required is solely due to the tighter confidence regions constructed using our proposed method.

To aid the computation of our confidence regions, we first computed an approximation to the optimal confidence regions using the p-values returned by a $\chi^2$ test. We only computed the optimal confidence region if the $\chi^2$ test indicated that one arm had a higher mean than the other at the desired confidence level. This allows us to speed up the computation while continuing to have the theoretical guarantees of our proposed confidence regions.
\begin{figure}
    \centering
    \includegraphics[width=0.66\textwidth]{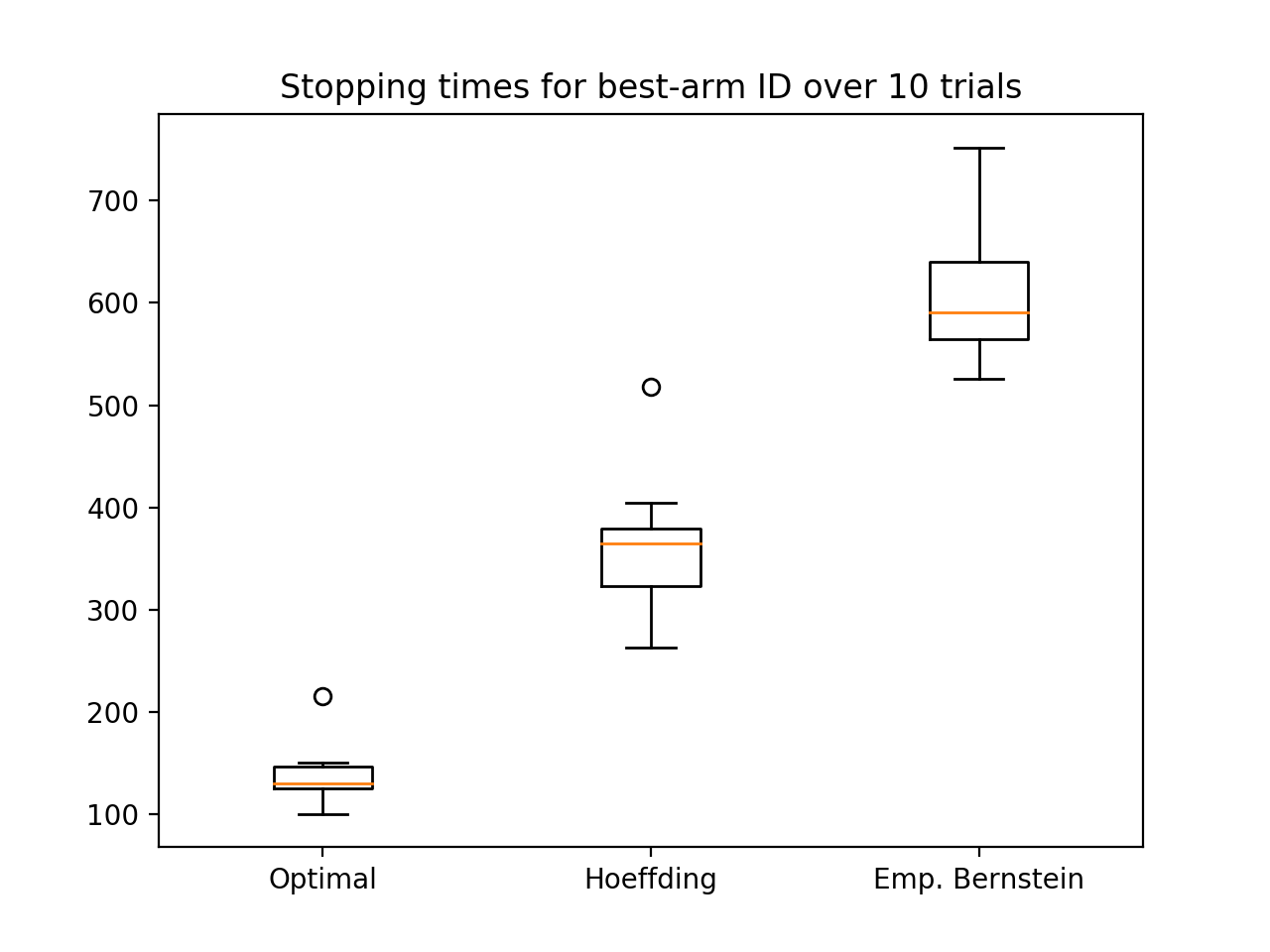}
    \caption{Box plots summarizing the stopping times over ten trials of the best arm identification experiment. The lower and upper levels of the boxes indicate $25$th and $75$th percentile of the data. The hollow circles indicate outliers.}
    \label{fig:boxplot}
\end{figure}

\section{Summary}

Construction of tight confidence regions is a challenging problem with a long history.  The problem has seen increased interest, as confidence bounds are central to the analysis and operation of many learning algorithms, especially sequential methods such as active learning, bandit problems, and reinforcement learning.  

This paper shows an optimal construction for confidence regions for the parameter of a multinomial distribution. The sets, termed \emph{minimal volume confidence regions} or level-set regions \cite{chafai2009confidence}, are optimal in the sense of having minimal volume in the probability simplex, on average, for a prescribed coverage (i.e., confidence).  More precisely, when averaged across the possible empirical outcomes or a uniform prior on the unknown parameter $\bp$, the regions have minimal volume among any confidence region construction that satisfies the coverage guarantee.  The \emph{minimal volume confidence regions} or level-set regions \cite{chafai2009confidence} are a generalization of the famous Clopper-Pearson confidence interval for the binomial \cite{clopper1934use}.  Clopper-Pearson, exact, and confidence regions are closely related to statistical significance testing. 

The minimal volume confidence regions may have utility in a broad range of applications.  Confidence regions not only play a central role in the analysis and design of modern machine learning algorithms, include sequential and adaptive methods such as multi-armed bandits and reinforcement learning, but traditional testing problems such as A/B testing.  An additional contribution of this paper is to show that the minimal volume confidence regions induce optimal (minimum-length) confidence intervals for linear functionals, such as the mean.  Hence, the induced confidence intervals are tighter, on average, than any known constructions, including Hoeffding bounds, Kullback Leibler divergence-based bounds \cite{garivier2013}, and the empirical Bernstein bound \cite{mnih2008empirical,maurer2009empirical,audibert2009exploration}.  To achieve a desired interval width, the new bounds require several times fewer samples than standard bounds in many cases.  This implies that the sample complexity or regret of bandit and reinforcement learning can be reduced by a corresponding factor.  

While computation of the regions is possible for modest $n$ and $k$, it can become prohibitive for problems with a large number of categories and samples.  To aid in computation, we relate the regions to $p$ values, and derive a bound based on Kullback Leibler divergence that can be used to accelerate computation.  In this paper we focused our attention on the multinomial parameter due to its wide applicability and importance across adaptive machine learning.  We note that the techniques can be extended to more general measure spaces equipped with a conditional probability measure, which we leave for future work. 

\bibliographystyle{IEEEtran}
\bibliography{confidence,multistar}

\end{document}